\documentclass[review,12pt]{elsarticle}

\usepackage{graphicx}
\usepackage[hyphens]{url}

\usepackage{algorithm}
\usepackage{algorithmic}

\usepackage{booktabs}

\usepackage{amsmath}
\usepackage{amssymb}
\usepackage{bbm}

\usepackage[colorlinks=true, linkcolor=blue, citecolor=blue, urlcolor=blue]{hyperref}

\newtheorem{proposition}{Proposition}
\newdefinition{remark}{Remark}
\newproof{pf}{Proof}

\journal{Pattern Recognition}

\begin{document}

\begin{frontmatter}

\title{Quantifying Volumetric Risk: Class-Aware Asymmetric Weighted Conformal Prediction for 3D Medical Image Segmentation}

\author{Shadi Alijani}
\author{Fereshteh Aghaee Meibodi}
\author{Homayoun Najjaran\corref{cor1}}
\ead{najjaran@uvic.ca}
\cortext[cor1]{Corresponding author}
\affiliation{organization={University of Victoria},
            addressline={800 Finnerty Road},
            city={Victoria},
            state={BC},
            country={Canada}}

\begin{abstract}
Reliable volumetric segmentation is critical for clinical diagnostics, yet foundation models such as MedSAM remain deterministic and lack calibrated uncertainty under distribution shift. Existing conformal prediction methods offer statistical guarantees but are frequently applied in 2D and assume symmetric error distributions, so they do not capture the class-specific biases that arise in 3D multi-class segmentation. We propose Class-Aware Asymmetric Weighted Conformal Prediction (CA-WCP), which combines latent-space density-ratio weighting for covariate shift with directional quantiles for the lower and upper volume bounds, and scales each bound by a class-specific asymmetry factor derived from validation-set false-positive and false-negative rates. We prove that CA-WCP retains the weighted-exchangeability marginal coverage guarantee for every class, and we evaluate it on 3D brain tumor segmentation (BraTS 2020) and on a synthetic multi-organ CT benchmark constructed under covariate shift. On both benchmarks the 95\% Clopper--Pearson interval for the observed coverage of CA-WCP contains the nominal 90\% level for every semantic class, while interval width is reduced by 8--14\% relative to symmetric weighted conformal prediction. We further encode the calibrated intervals into structured prompts for a multimodal large language model to produce uncertainty-conditioned radiology reports, linking distribution-shift-aware uncertainty quantification to interpretable clinical communication.
\end{abstract}

\begin{keyword}
Conformal prediction \sep Uncertainty quantification \sep 3D medical image segmentation \sep Covariate shift \sep Foundation models \sep Radiology report generation
\end{keyword}

\end{frontmatter}

%% Line numbering is supplied by the [review] class option -- do NOT call
%% \linenumbers here (lineno is not loaded by the class and it errors out).

% ---------------------------------------------------------------
\section{Introduction}
\label{sec:intro}

Reliable volumetric segmentation is essential for clinical diagnostics and treatment planning in medical imaging. Foundation models such as the Segment Anything Model (SAM)~\cite{kirillov2023segment} have demonstrated strong zero-shot transfer capabilities, leading to adaptations like MedSAM~\cite{ma2024segment} that apply these models to the medical domain via large-scale pretraining and parameter-efficient fine-tuning. To address volumetric clinical applications, recent extensions further incorporate 3D context, either by redesigning the architecture for native volumetric processing or by introducing lightweight 3D adapters and prompt propagation mechanisms~\cite{wang2025sam, zhu2024medical}.

However, most of these methods remain focused on improving segmentation accuracy, offering little insight into prediction reliability or uncertainty calibration~\cite{begoli2019need, nair2020exploring}, which is a critical requirement for clinical adoption. While prior efforts explored probabilistic or Bayesian models~\cite{kohl2018probabilistic, mehrtash2020confidence, yu2019uncertainty}, conformal prediction (CP)~\cite{vovk2005algorithmic} has emerged as a powerful framework for providing formal statistical guarantees on segmentation reliability.

Despite their success, conventional CP methods face two limitations in this context. First, they rely on the exchangeability assumption, which is often violated in real-world medical settings due to distribution shifts between training and deployment sites. This has motivated recent work in distribution-shift-aware CP that uses adaptive weighting of calibration statistics to restore target coverage~\cite{alijani2025wqlcp, bhatnagar2023improved, kasa2023empirically}. Second, while recent work has explored CP for 3D medical imaging biomarkers~\cite{cheung2026compass, cheung2025metric, wen2024task, wen2025minimax, tassopoulou2026uncertainty}, these methods predominantly target reconstruction metrics or trajectory prediction rather than multi-class volumetric segmentation under covariate shift. Existing volumetric CP formulations largely assume symmetric error distributions, which do not capture the class-specific asymmetric biases observed in multi-class segmentation tasks such as brain tumor analysis, where distinct structures exhibit different false-positive and false-negative patterns.

Separately, large language models (LLMs) have shown strong reasoning in medical question answering and report drafting~\cite{singhal2023large, achiam2023gpt}, but they lack a mechanism for statistical uncertainty quantification.

Motivated by these gaps, our work extends weighted CP to the volumetric 3D multi-class setting and integrates the resulting calibrated intervals with an LLM to produce uncertainty-aware clinical reports. A further practical limitation in the medical CP literature is reliance on single-modality datasets, owing to privacy constraints and annotation costs for multi-center clinical data. To test whether the directional error asymmetry we exploit reflects structure scale rather than brain tumor morphology specifically, we additionally construct a controlled synthetic CT benchmark with anatomically aligned ground-truth masks using a latent rectified flow generator for volumetric medical synthesis~\cite{zhao2026maisi_model, zhao2026maisi_tool}.

Our main contributions are:
\begin{itemize}
    \item A weighted conformal prediction framework for 3D volumetric segmentation that integrates class-conditional asymmetric calibration with MedSAM to provide uncertainty quantification under covariate shift.
    \item Directional quantiles that separately calibrate the lower and upper volume bounds, combined with asymmetry factors derived from validation-set FP/FN rates that add a conservative, class-specific margin on the side of the dominant error.
    \item A proof that CA-WCP retains the marginal coverage guarantee under weighted exchangeability with known likelihood ratios (Proposition~\ref{prop:coverage}, proved in \ref{app:proof}), together with an explicit statement of the approximation incurred when the likelihood ratios are estimated.
    \item An empirical evaluation on BraTS 2020 and on a controlled synthetic multi-organ CT benchmark, reporting exact $k/n$ coverage with Clopper--Pearson intervals for every class.
    \item A framework for uncertainty-conditioned multimodal LLM report generation that encodes calibrated prediction intervals into structured prompts, together with a preliminary assessment by an independent radiologist blinded to the prompt configuration.
\end{itemize}

% ---------------------------------------------------------------
\section{Background}
\label{sec:Background}

\subsection{Foundation Models for 2D and 3D Medical Segmentation}
Medical image analysis has moved from task-specific architectures such as U-Net~\cite{ronneberger2015u} toward foundation models that generalize across datasets and tasks~\cite{khan2025comprehensive, kaliappan2026unified}. Among them, SAM~\cite{kirillov2023segment} enables prompt-driven segmentation with strong zero-shot transfer, motivating specialized adaptations for medical imaging~\cite{alijani2024vision}. MedSAM~\cite{ma2024segment}, SAM-Med2D~\cite{cheng2023sam}, and SAMed~\cite{zhang2023customized} adapt SAM for 2D medical images via large-scale pretraining, systematic fine-tuning, and efficient low-rank adaptation, respectively. To extend these capabilities to 3D, SAM-Med3D~\cite{wang2025sam} and SAM3D~\cite{bui2024sam3d} develop native 3D formulations, while MedSAM-2~\cite{zhu2024medical} uses video-style prompt propagation. Despite their accuracy, these methods largely neglect uncertainty quantification and assume stable data distributions.

\subsection{Uncertainty Quantification in Medical Segmentation}
Quantifying uncertainty is critical for improving the robustness of medical segmentation models~\cite{alijani2024vision, lu2022improving, vazquez2022conformal}. Early approaches included generative multi-hypothesis models such as the Probabilistic U-Net~\cite{kohl2018probabilistic}, semi-supervised consistency regularization~\cite{yu2019uncertainty}, and uncertainty-aware loss functions~\cite{mehrtash2020confidence}. Within the SAM family, uncertainty has been injected through adapters~\cite{jiang2024uncertainty} or specialized fine-tuning, as in I-MedSAM~\cite{wei2024medsam} and U-MedSAM~\cite{wang2024u}. While these methods provide useful heuristics for model confidence, they generally lack the formal statistical guarantees offered by CP.

\subsection{Conformal Prediction and Distribution Shifts}
A complementary line of work uses CP to provide statistical guarantees on segmentation reliability~\cite{zhang2023rr, gade2024impact}. While early CP methods focused on pixel-level uncertainty~\cite{mossina2024conformal}, later extensions introduced morphology-aware prediction sets~\cite{mossina2025conformal}. To mitigate distribution shifts, weighted CP variants have been developed~\cite{tibshirani2019conformal, alijani2025wqlcp}, and TriadNet~\cite{lambert2023triadnet} laid the groundwork for multi-head uncertainty. Conformal methods have also been applied to 3D medical imaging biomarkers, including segmentation-derived metrics, image reconstruction, imaging inverse problems, and longitudinal trajectories~\cite{cheung2026compass, cheung2025metric, wen2024task, wen2025minimax, tassopoulou2026uncertainty}. Many CP approaches for segmentation nonetheless operate in 2D or at region-level volumetry~\cite{lambert2024robust, gade2024impact}, and class effects are typically handled only implicitly through adaptive score functions. Group-conditional constructions instead partition the calibration set explicitly, and recent conformal predictive systems extend this idea to settings in which the error scale differs across groups~\cite{zhang2025mondrian}. Our framework calibrates class effects explicitly and couples them to the directional structure of the volumetric error.

\subsection{LLMs in Medical Imaging and Uncertainty Communication}
LLMs have recently demonstrated strong reasoning capabilities across a range of medical tasks. Med-PaLM~\cite{singhal2023large} and GPT-4~\cite{achiam2023gpt, nori2023capabilities} show near expert-level performance on medical benchmarks, and vision-language models such as LLaVA-Med~\cite{li2023llava} enable image-grounded dialogue. Despite these advances, the integration of statistically calibrated uncertainty quantification into LLM-based medical systems remains limited.

% ---------------------------------------------------------------
\section{Method}
\label{sec:Method}

\textbf{Notation.} $v^*$ denotes ground-truth volume; $v^{\text{low}}, v^{\text{mean}}, v^{\text{up}}$ denote predicted volumes from three heads; $\gamma_{\text{left}}, \gamma_{\text{right}}$ denote asymmetry factors; $q_{\text{left}}, q_{\text{right}}$ denote directional quantiles; $w_i$ denotes density-ratio weights; and $c \in \mathcal{C}$ indexes semantic classes (tumor subregions in brain MRI, abdominal organs in CT). Throughout, $\alpha$ is the conformal miscoverage rate; the Tversky loss parameters are written $\alpha_T, \beta_T$.

\subsection{Problem Formulation}
Given a trained segmentation model $f: \mathcal{X} \rightarrow \mathcal{Y}$ mapping 3D medical images $\mathbf{x} \in \mathcal{X}$ to segmentations $\mathbf{y} \in \mathcal{Y}$, we seek calibrated predictive intervals for structure volumes $\text{vol}(\mathbf{y})$. For multi-class segmentation we require intervals $\Gamma_{\alpha}(\mathbf{x}) = [\ell, u]$ such that the true volume $v^*$ is contained with probability at least $1-\alpha$:
\begin{equation}
\mathbb{P}(v^* \in [\ell, u]) \geq 1 - \alpha
\end{equation}
where $\alpha$ is the desired miscoverage rate ($\alpha = 0.10$ for 90\% coverage throughout).

\subsection{Multi-Head Volumetric Backbone}
\label{sec:backbone}

Following TriadNet~\cite{lambert2023triadnet}, we adopt a multi-head architecture producing three segmentation estimates per input: a lower bound $f_{\text{low}}$, a mean estimate $f_{\text{mean}}$, and an upper bound $f_{\text{up}}$. The heads share an encoder but have separate decoder branches. They are trained with the Tversky loss~\cite{salehi2017tversky} using different $\alpha_T$ and $\beta_T$ to control false positive and false negative penalties:
\begin{equation}\label{eq:tversky}
\mathcal{L}_T(\mathbf{p}, \mathbf{y}) = 1 - \frac{\sum_i p_i y_i}{\sum_i p_i y_i + \alpha_T \sum_i p_i(1-y_i) + \beta_T \sum_i (1-p_i)y_i}
\end{equation}
where the fraction is the Tversky index (a generalization of the Dice coefficient), so that minimizing $\mathcal{L}_T$ maximizes overlap.

The binarization thresholds $\tau_{\text{low}}, \tau_{\text{mean}}, \tau_{\text{up}}$ are fixed before calibration and are applied identically to the calibration and test volumes. CA-WCP is therefore a strictly post-hoc wrapper: it does not modify the segmentation masks, and segmentation accuracy (Dice) is by construction identical across all CP variants built on the same backbone.

\subsection{Weighted Conformal Prediction}
Given a calibration set $\mathcal{D}_{\text{cal}} = \{(\mathbf{x}_i, v_i^*)\}_{i=1}^{n}$ and test point $\mathbf{x}_{\text{test}}$, standard split CP computes conformal scores $s_i = \max(v_i^* - v_i^{\text{up}},\, v_i^{\text{low}} - v_i^*)$ and forms the interval $[v_{\text{test}}^{\text{low}} - q,\; v_{\text{test}}^{\text{up}} + q]$. To handle covariate shift between calibration and test distributions, weighted conformal prediction~\cite{tibshirani2019conformal, alijani2025wqlcp} assigns importance weights $w_i$ to calibration samples. The weights are estimated with a probabilistic classifier $\phi$ trained to distinguish test latent features (label 1) from calibration latent features (label 0). Writing $\hat{p}(\mathbf{z}) = \mathbb{P}_{\phi}(\text{label}=1 \mid \mathbf{z})$, the density ratio at latent feature $\mathbf{z}_i$ is estimated as
\begin{equation}\label{eq:weights}
w_i = \frac{n_{\text{cal}}}{n_{\text{test}}} \cdot \frac{\hat{p}(\mathbf{z}_i)}{1 - \hat{p}(\mathbf{z}_i)},
\end{equation}
where the prior factor $n_{\text{cal}}/n_{\text{test}}$ corrects for unequal set sizes; $w_{\text{test}}$ is obtained in the same way from $\mathbf{z}_{\text{test}}$. Because Eq.~(\ref{eq:weighted_quantile}) uses only normalized weights, any factor common to all samples, including this prior correction, cancels in the quantile.

For a target level $\beta \in (0,1)$, the weighted quantile $q_w(\beta)$ is defined as the infimum over values carrying sufficient weighted probability mass, with a point mass at $+\infty$ for the test point. Writing $W = \sum_{j=1}^{n} w_j + w_{\text{test}}$ for the total mass and $\tilde{w}_i = w_i / W$, $\tilde{w}_{\text{test}} = w_{\text{test}} / W$ for the normalized weights,
\begin{equation}
\label{eq:weighted_quantile}
q_w(\beta) = \inf \Bigl\{ q \in \mathbb{R} \;:\; \sum_{i=1}^{n} \tilde{w}_i \,\mathbbm{1}[s_i \leq q] \;+\; \tilde{w}_{\text{test}} \,\mathbbm{1}[\infty \leq q] \;\geq\; \beta \Bigr\}
\end{equation}
Symmetric weighted CP instantiates Eq.~(\ref{eq:weighted_quantile}) at $\beta = 1-\alpha$ on the two-sided score $s_i$.

\begin{figure}[!t]
\centering
\includegraphics[width=\columnwidth]{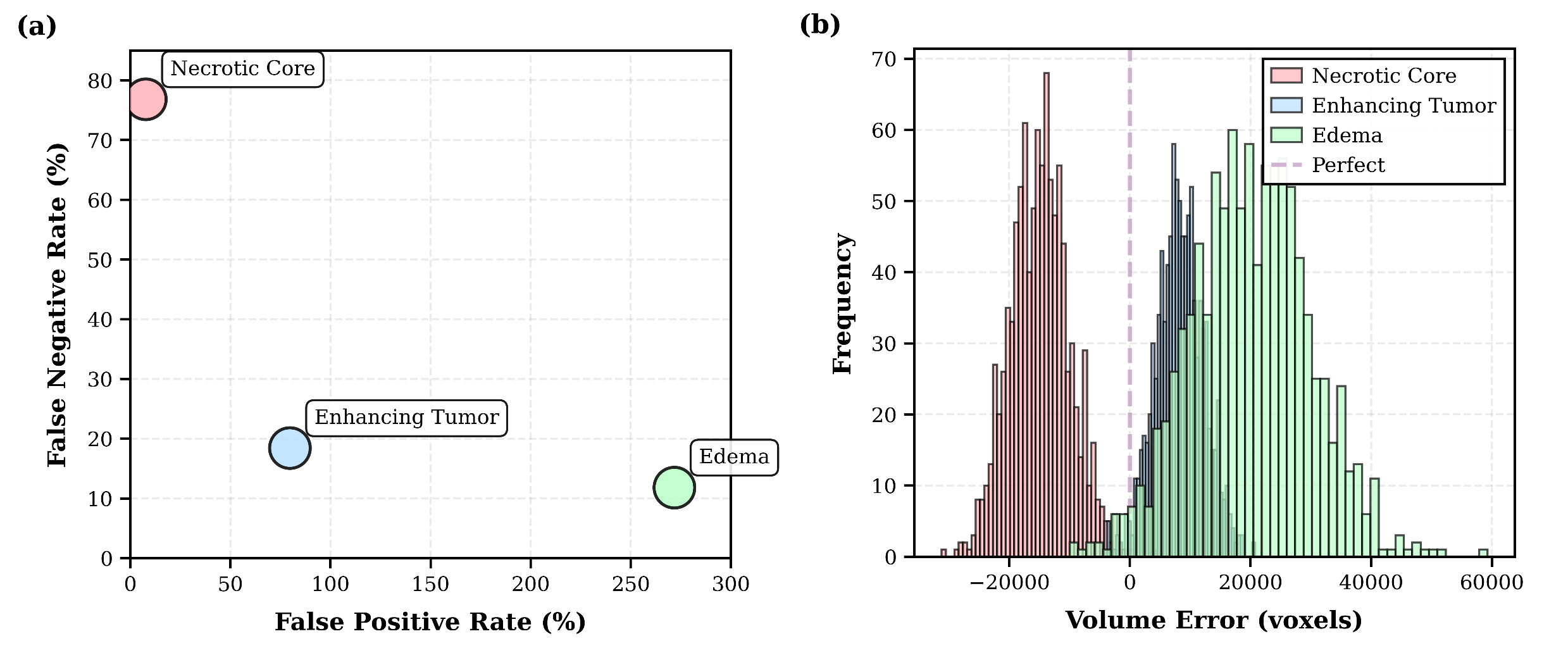}
\caption{Class-specific error patterns revealing asymmetric biases in brain tumor segmentation, measured on the validation split.
(Left) False positive versus false negative rates. Rates are normalized by ground-truth volume $v^*$ (Eq.~(\ref{eq:fpfn_rates})), so values above 100\% are well defined and indicate severe over-segmentation. Necrotic core shows a false-negative bias, edema a large false-positive bias, and enhancing tumor a moderate false-positive bias.
(Right) Distribution of volume errors across classes, showing the skew that motivates asymmetric interval design.}
\label{fig:error_patterns}
\end{figure}

\subsection{Class-Aware Asymmetric Weighted Conformal Prediction}
\label{sec:cawcp_formulation}

\textbf{Motivation.} Split CP assumes symmetric error distributions and produces intervals $[v^{\text{low}} - q,\, v^{\text{up}} + q]$ with equal margins on both sides. Multi-class medical segmentation, however, exhibits class-specific asymmetric biases: different semantic classes show distinct error patterns (Figure~\ref{fig:error_patterns}). Brain tumor segmentation shows systematic under-segmentation of necrotic core and over-segmentation of edema. Such patterns make symmetric intervals inefficient, producing under-coverage in some classes and over-conservatism in others.

\begin{figure}[!t]
\centering
\includegraphics[width=1.0\textwidth]{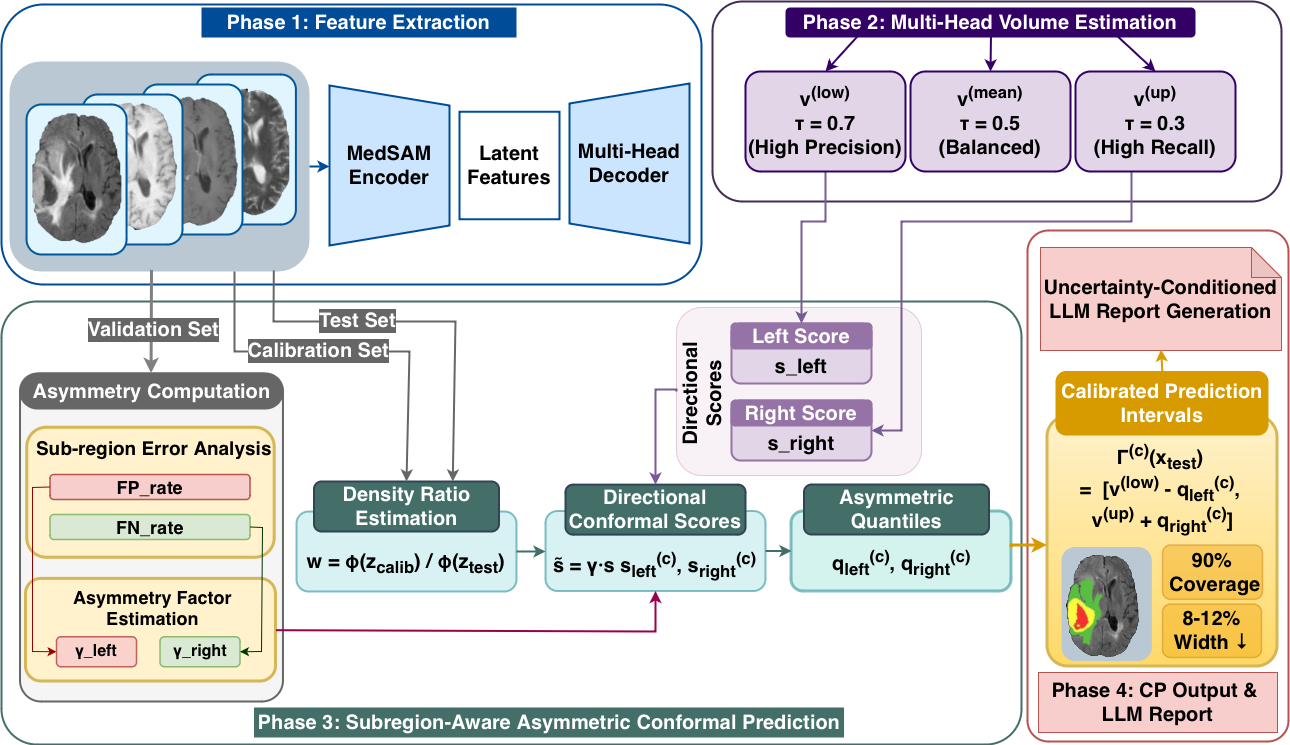}
\caption{Overview of the proposed CA-WCP pipeline: feature extraction, multi-threshold segmentation, and weighted asymmetric conformal prediction.}
\label{fig:workflow}
\end{figure}

Figure~\ref{fig:workflow} gives an overview of the full pipeline.

\textbf{Error-rate definitions.} For class $c$, let $\mathbf{y}^{(c)}$ and $\hat{\mathbf{y}}^{(c)}$ denote the ground-truth and predicted (mean-head) binary masks. We define the volumetric false positive and false negative rates, both normalized by ground-truth volume:
\begin{equation}\label{eq:fpfn_rates}
\text{FP}_{\text{rate}}^{(c)} = \frac{|\hat{\mathbf{y}}^{(c)} \setminus \mathbf{y}^{(c)}|}{|\mathbf{y}^{(c)}|},
\qquad
\text{FN}_{\text{rate}}^{(c)} = \frac{|\mathbf{y}^{(c)} \setminus \hat{\mathbf{y}}^{(c)}|}{|\mathbf{y}^{(c)}|}.
\end{equation}
Both rates are averaged over $\mathcal{D}_{\text{val}}$.

\textbf{Method.} We learn class-specific asymmetric quantiles $(q_{\text{left}}^{(c)}, q_{\text{right}}^{(c)})$ for each $c \in \mathcal{C}$. The predictive interval is
\begin{equation}\label{eq:interval}
\Gamma^{(c)}(\mathbf{x}_{\text{test}}) = \bigl[v_{\text{test}}^{(c,\text{low})} - \gamma_{\text{left}}^{(c)} q_{\text{left}}^{(c)},\; v_{\text{test}}^{(c,\text{up})} + \gamma_{\text{right}}^{(c)} q_{\text{right}}^{(c)}\bigr]
\end{equation}
calibrated from directional conformal scores computed separately:
\begin{align}
s_{\text{left}}^{(c)} &= \max(v^{(c,\text{low})} - v^{*,(c)},\, 0) \label{eq:score_left} \\
s_{\text{right}}^{(c)} &= \max(v^{*,(c)} - v^{(c,\text{up})},\, 0) \label{eq:score_right}
\end{align}

The asymmetry factors are determined on $\mathcal{D}_{\text{val}}$ by a piecewise rule in $\text{FN}_{\text{rate}}^{(c)}$ and $\text{FP}_{\text{rate}}^{(c)}$:
\begin{align}
\gamma_{\text{left}}^{(c)} &= \begin{cases}
1.0 + 2.0 \cdot \text{FN}_{\text{rate}}^{(c)} & \text{if } \text{FN}_{\text{rate}}^{(c)} > 0.5 \\
1.0 + 0.5 \cdot (\text{FP}_{\text{rate}}^{(c)} - 1.0) & \text{else if } \text{FP}_{\text{rate}}^{(c)} \geq 1.5 \\
1.0 + 0.4 \cdot (\text{FP}_{\text{rate}}^{(c)} - 0.7) & \text{else if } \text{FP}_{\text{rate}}^{(c)} \in (0.7, 1.5) \\
1.0 & \text{otherwise}
\end{cases} \label{eq:asymmetry_left} \\
\gamma_{\text{right}}^{(c)} &= \begin{cases}
1.0 + 2.0 \cdot \text{FN}_{\text{rate}}^{(c)} & \text{if } \text{FN}_{\text{rate}}^{(c)} > 0.5 \\
1.0 & \text{otherwise}
\end{cases} \label{eq:asymmetry_right}
\end{align}
Branches are evaluated top to bottom and the first matching condition is applied, so the rule is well defined when several conditions hold simultaneously. Every branch yields $\gamma \geq 1$, a property used in the coverage analysis below. The breakpoints and coefficients are fixed on $\mathcal{D}_{\text{val}}$ prior to calibration and are not tuned on $\mathcal{D}_{\text{cal}}$ or on the test data.

The directional quantiles are computed from the \emph{unscaled} directional scores, each at level $1-\alpha/2$ using the density-ratio weighted quantile of Eq.~(\ref{eq:weighted_quantile}):
\begin{align}
q_{\text{left}}^{(c)} &= q_w\bigl(1-\tfrac{\alpha}{2}\bigr) \text{ computed on } \{s_{\text{left}}^{(c)}(i)\}_{i \in \mathcal{D}_{\text{cal}}} \text{ with weights } \{w_i\} \label{eq:qleft} \\
q_{\text{right}}^{(c)} &= q_w\bigl(1-\tfrac{\alpha}{2}\bigr) \text{ computed on } \{s_{\text{right}}^{(c)}(i)\}_{i \in \mathcal{D}_{\text{cal}}} \text{ with weights } \{w_i\} \label{eq:qright}
\end{align}
The per-direction level $1-\alpha/2$ is what makes the union bound in \ref{app:proof} yield overall coverage $1-\alpha$. Setting $\gamma_{\text{left}}^{(c)} = \gamma_{\text{right}}^{(c)} = 1$ yields the Directional WCP baseline; the comparison between the Directional WCP and CA-WCP rows of Table~\ref{tab:results} therefore constitutes a direct ablation of the asymmetry mechanism, holding the backbone, the weighting, and the calibration split fixed.

\textbf{Relation to prior CP techniques.} CA-WCP composes three established conformal ingredients: density-ratio weighted CP for covariate shift~\cite{tibshirani2019conformal, alijani2025wqlcp}; directional asymmetric scores, in the spirit of conformalized quantile regression~\cite{romano2019conformalized}; and class-conditional (Mondrian) calibration~\cite{vovk2005algorithmic}, as also used for subgroup-equitable coverage in medical imaging~\cite{lu2022fair}. Our contribution is not any single ingredient but their integration in the volumetric multi-class setting, together with the validation-derived asymmetry factors $\gamma^{(c)}$ that couple interval geometry to the observed per-class FP/FN error structure. That coupling is absent from a naive composition of the three techniques, and it is what distinguishes CA-WCP from applying Mondrian calibration to a symmetric weighted score. In the nested-set formulation of conformal prediction~\cite{gupta2022nested}, the construction corresponds to a two-parameter family of nested intervals whose lower and upper radii are calibrated separately, rather than the single-radius family underlying symmetric split CP.

\textbf{Coverage guarantee.}

\begin{proposition}[Marginal Coverage]
\label{prop:coverage}
Fix a class $c$ and let $\gamma_{\mathrm{left}}^{(c)}, \gamma_{\mathrm{right}}^{(c)} \geq 1$ be hyperparameters learned from $\mathcal{D}_{\mathrm{val}}$, independent of $\mathcal{D}_{\mathrm{cal}}$ and of the test point. Suppose the calibration samples $\{(\mathbf{x}_i, v_i^*)\}_{i=1}^n$ and the test point $(\mathbf{x}_{\mathrm{test}}, v_{\mathrm{test}}^*)$ are weighted exchangeable in the sense of~\cite{tibshirani2019conformal} (likelihood ratio $\mathrm{d}\tilde{P}_X / \mathrm{d}P_X$), with weights proportional to the true likelihood ratio between the test and calibration covariate distributions, and that each directional quantile is computed at level $1-\alpha/2$ via Eq.~(\ref{eq:weighted_quantile}). Then, conditional on $\gamma^{(c)}$,
\[
\mathbb{P}\bigl(v_{\mathrm{test}}^{*,(c)} \in \Gamma^{(c)}(\mathbf{x}_{\mathrm{test}}) \,\bigm|\, \gamma^{(c)}\bigr) \geq 1 - \alpha .
\]
\end{proposition}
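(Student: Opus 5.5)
The argument has three steps: a monotonicity reduction to the unscaled intervals, a separate weighted-conformal bound for each direction at level $1-\alpha/2$, and a union bound over the two miscoverage events. Throughout I condition on $\mathcal{D}_{\mathrm{val}}$, which fixes $\gamma^{(c)}$ and also the backbone thresholds. Since these quantities are independent of $\mathcal{D}_{\mathrm{cal}}$ and of the test point, conditioning on them leaves the weighted exchangeability of the calibration and test pairs intact. The scores $s_{\mathrm{left}}^{(c)}$ and $s_{\mathrm{right}}^{(c)}$ of Eqs.~(\ref{eq:score_left})--(\ref{eq:score_right}) are then fixed measurable functions of $(\mathbf{x}, v^*)$.

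\textbf{Step 1 (reduction).} The quantiles $q_{\mathrm{left}}^{(c)}, q_{\mathrm{right}}^{(c)}$ are weighted quantiles of nonnegative scores, so they are nonnegative (possibly $+\infty$). Because $\gamma \ge 1$, the interval $\Gamma^{(c)}$ of Eq.~(\ref{eq:interval}) contains the unscaled interval $[v^{(c,\mathrm{low})}_{\mathrm{test}} - q_{\mathrm{left}}^{(c)},\, v^{(c,\mathrm{up})}_{\mathrm{test}} + q_{\mathrm{right}}^{(c)}]$. Non-coverage by $\Gamma^{(c)}$ therefore implies non-coverage by this smaller interval. That in turn means either $v^{(c,\mathrm{low})}_{\mathrm{test}} - v^{*,(c)}_{\mathrm{test}} > q_{\mathrm{left}}^{(c)}$ or $v^{*,(c)}_{\mathrm{test}} - v^{(c,\mathrm{up})}_{\mathrm{test}} > q_{\mathrm{right}}^{(c)}$. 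Since $q \ge 0$, each of these is equivalent to $s_{\mathrm{left},\mathrm{test}}^{(c)} > q_{\mathrm{left}}^{(c)}$ or $s_{\mathrm{right},\mathrm{test}}^{(c)} > q_{\mathrm{right}}^{(c)}$, respectively. The clipping at $0$ is harmless because a positive excess coincides with the clipped score.

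\textbf{Step 2 (per-direction coverage).} This is the step that needs care. I invoke Lemma~3 / Corollary~1 of~\cite{tibshirani2019conformal} separately for each score function. Under weighted exchangeability with weights proportional to the true likelihood ratio, condition on the multiset of the $n+1$ data points. The test point is then the $i$-th point with probability $p_i = w(\mathbf{x}_i)/\sum_j w(\mathbf{x}_j)$, where the sum runs over all $n+1$ points including the test point. Consequently the test score is distributed as $\sum_i p_i \delta_{s_i}$, and
\[
\mathbb{P}\bigl(s_{\mathrm{test}} \le \mathrm{Quantile}_{1-\alpha/2}\bigl(\textstyle\sum_i p_i\delta_{s_i}\bigr)\bigr) \ge 1-\alpha/2 .
\]
The remaining work is to replace the test score inside this quantile by a point mass at $+\infty$ with weight $\tilde w_{\mathrm{test}}$, which is exactly the construction of Eq.~(\ref{eq:weighted_quantile}). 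This replacement can only increase the quantile, so it preserves the inequality. The resulting quantile is computable from $\mathcal{D}_{\mathrm{cal}}$ and $\mathbf{x}_{\mathrm{test}}$ alone and equals $q_{\mathrm{left}}^{(c)}$ (respectively $q_{\mathrm{right}}^{(c)}$). I also need to check two details. First, proportionality constants, including the prior factor $n_{\mathrm{cal}}/n_{\mathrm{test}}$, cancel under normalization. Second, ties are handled correctly by the infimum definition.

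\textbf{Step 3 (union bound).} Combining the two steps,
\[
\mathbb{P}\bigl(v^{*,(c)}_{\mathrm{test}} \notin \Gamma^{(c)} \mid \gamma^{(c)}\bigr) \le \mathbb{P}\bigl(s_{\mathrm{left},\mathrm{test}}^{(c)} > q_{\mathrm{left}}^{(c)}\bigr) + \mathbb{P}\bigl(s_{\mathrm{right},\mathrm{test}}^{(c)} > q_{\mathrm{right}}^{(c)}\bigr) \le \tfrac{\alpha}{2} + \tfrac{\alpha}{2} = \alpha,
\]
which proves the claim. The result does not depend on the class $c$ beyond fixing the score functions, so it holds for every class.

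I expect the main obstacle to be the rigor of Step 2's conditioning argument. Specifically, the weighted-exchangeability lemma must hold conditionally on the validation-derived quantities, and using the same weights $w_i$ for both directions must not interfere. Interference is ruled out because each direction is handled by a separate marginal statement, and the union bound requires no independence between the two events.
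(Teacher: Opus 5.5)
Your proposal is correct and follows essentially the same route as the paper's proof. Both combine per-direction weighted-conformal coverage at level $1-\alpha/2$ (Tibshirani et al.), monotone widening from $\gamma \ge 1$ and $q \ge 0$, and a union bound; you simply do the monotonicity reduction first and unpack the weighted-exchangeability lemma (multiset conditioning, then replacing the test score by a point mass at $+\infty$), which the paper cites directly.
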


\begin{remark}
The guarantee is marginal and holds separately for each class $c$; we do not claim simultaneous coverage of all classes at level $1-\alpha$, which would require either a correction across $|\mathcal{C}|$ targets or an explicitly multivariate construction such as copula-based multi-target conformal prediction~\cite{messoudi2021copula}. Proposition~\ref{prop:coverage} assumes the true likelihood-ratio weights. In practice $\phi$ is estimated from latent features, so the weights are approximate and the guarantee degrades by a term governed by the weight-estimation error~\cite{tibshirani2019conformal, barber2021predictive}. The factors $\gamma^{(c)} \geq 1$ do not alter this statement: they can only widen the interval, on the side where validation errors concentrate, and so act as a conservative margin when the weights are estimated, but they are neither derived from nor a bound on the weight-estimation error. We therefore report exact $k/n$ coverage with 95\% Clopper--Pearson intervals throughout Sec.~\ref{sec:Experiments}; on both benchmarks the intervals for CA-WCP contain the nominal 90\% level for every class.
\end{remark}

\subsection{Uncertainty-Conditioned Multimodal LLM Report Generation}
\label{sec:llm_report}

To translate the quantitative uncertainty estimates into clinically readable form, we condition a multimodal LLM on the calibrated volumetric intervals.

\textbf{Confidence Level Derivation.} For each class $c$ we compute the relative interval width (RW), normalizing the interval against the mean-head volume and using the same asymmetry-scaled endpoints as Eq.~(\ref{eq:interval}):
\begin{equation}
    \text{RW}^{(c)} = \frac{\bigl(v^{(c,\text{up})} + \gamma_{\text{right}}^{(c)} q_{\text{right}}^{(c)}\bigr) - \bigl(v^{(c,\text{low})} - \gamma_{\text{left}}^{(c)} q_{\text{left}}^{(c)}\bigr)}{v^{(c,\text{mean})}}
\end{equation}
RW is mapped to discrete confidence levels using thresholds set from quantiles of the validation-set RW distribution: High (RW $< 35\%$), Moderate ($35\%$--$60\%$), and Low ($> 60\%$). All volumes are converted to milliliters using image spacing before being passed to the LLM.

\textbf{Uncertainty-Conditioned Prompting.} We employ GPT-4o at temperature 0.0 to minimize sampling variability, with a multimodal prompt containing (1) the MR image, (2) the generated segmentation mask, and (3) a structured text representation of the conformal outputs including the derived confidence level. The prompt instructs the model to adapt its language to the confidence level, using more cautious phrasing for low-confidence predictions. Reports follow radiology convention with a ``Findings'' section and an ``Impression'' section.

% ---------------------------------------------------------------
\section{Experiments}
\label{sec:Experiments}

\subsection{Implementation Details}
We adapt MedSAM~\cite{ma2024segment} for brain tumor segmentation using multi-modal MR images (T1, T1ce, T2, FLAIR). The image encoder is TinyViT with output dimensions $256 \times 64 \times 64$. We fine-tune the multi-head decoder jointly on the BraTS 2020 training split for 200 epochs using Adam (base learning rate $3\times10^{-5}$, cosine decay, batch size 2). Following TriadNet~\cite{lambert2023triadnet}, the three heads are parameterized by varying the Tversky weights: the lower head emphasizes precision with $(\alpha_T,\beta_T)=(0.3,0.7)$, the mean head is symmetric at $(0.5,0.5)$, and the upper head emphasizes recall at $(0.7,0.3)$. Logits are binarized with fixed thresholds $\tau_{\text{low}}=0.7$, $\tau_{\text{mean}}=0.5$, $\tau_{\text{up}}=0.3$, held constant across calibration and test.

For latent feature extraction we take global-average-pooled TinyViT features after the penultimate block and apply $L_2$ normalization, giving 256-dimensional vectors. The density-ratio estimator $\phi$ is a logistic regression classifier trained on calibration versus test latent features ($n_{\text{cal}}=66$, $n_{\text{test}}=73$, with the unequal set sizes handled by the prior factor in Eq.~(\ref{eq:weights})) and evaluated with 20-fold stratified cross-validation; we report the mean ROC-AUC ($0.78 \pm 0.03$) as a measure of separability. Because $\phi$ is a linear model on normalized features, no kernel bandwidth is involved. We fix the random seed to 42 for data splits, network initialization, and classifier training.

\subsection{Experimental Setup}
\textbf{Dataset.} We use BraTS 2020~\cite{menze2014multimodal, bakas2017advancing, bakas2018identifying}, a multi-class task with three tumor classes (necrotic core, edema, enhancing tumor) exhibiting distinct error patterns. We use a four-way split: training ($n=200$), validation ($n=30$), calibration ($n=66$), and test ($n=73$ from unseen institutions), with no overlap between sets. Calibration and test are drawn from different institutional cohorts, inducing a natural covariate shift that the density-ratio estimator is designed to correct.

\textbf{Synthetic CT benchmark.} To isolate the dependence of the directional asymmetry on structure scale, we construct a synthetic multi-organ CT dataset using NV-Generate-CTMR~\cite{zhao2026maisi_tool}, built on the MAISI-v2 latent rectified flow architecture~\cite{zhao2026maisi_model}. We generate 500 whole-body CT volumes with paired anatomically aligned masks: 300 are used to fine-tune the MedSAM multi-head backbone, and the remaining 200 are partitioned into validation ($n=50$), calibration ($n=50$), and test ($n=100$) sets. To simulate an acquisition shift, the test set is generated under different textual conditioning prompts and structure-scale parameters. The asymmetry factors $\gamma$ are computed from scratch on the CT validation set rather than reused from BraTS. We evaluate three organs spanning a range of scales: liver (large), kidneys (medium), and spleen (small).

\textbf{Metrics.} We report (1) \textit{Coverage}, the fraction of test cases with $v^* \in \Gamma(\mathbf{x})$, given as an exact $k/n$ fraction with a 95\% Clopper--Pearson interval; (2) \textit{Interval Width}, the mean $u - \ell$ in mL; and (3) \textit{Dice}, reported once per backbone since CP is post-hoc and does not alter the masks.

\textbf{Baselines.} We compare against (1) \textit{Unweighted CP}, a single shared quantile with no density-ratio correction; (2) \textit{Symmetric WCP}~\cite{lambert2024robust}, density-ratio weighting with symmetric intervals computed on the maximum directional score; and (3) \textit{Directional WCP} ($\gamma \equiv 1$), which uses the directional quantiles of Eqs.~(\ref{eq:qleft})--(\ref{eq:qright}) but applies no asymmetry factors. All three share the backbone, the density-ratio weights, and the calibration split with CA-WCP.

\subsection{Results}

Table~\ref{tab:results} reports per-class coverage and interval width on BraTS 2020. With $n=73$ test subjects, coverage is quantized to multiples of $1/73$; the achievable values bracketing the 90\% target are $65/73 = 89.0\%$ and $66/73 = 90.4\%$.

Symmetric WCP falls below the target on edema and over-covers enhancing tumor, where its Clopper--Pearson interval [95.1--100.0] excludes the nominal level, consistent with the class-specific error skew in Figure~\ref{fig:error_patterns}. Replacing the single max-score quantile by two directional quantiles (Directional WCP, $\gamma \equiv 1$) removes most of this inefficiency: mean width falls from 9.7 to 6.7\,mL (Table~\ref{tab:main_results}), because the larger error direction no longer sets both margins. Its point coverage, however, lies below the target for all three classes (87.7\%, 84.9\%, and 89.0\%), although each Clopper--Pearson interval still contains 90\% at $n=73$.

CA-WCP applies the asymmetry factors computed on $\mathcal{D}_{\text{val}}$ by Eqs.~(\ref{eq:asymmetry_left})--(\ref{eq:asymmetry_right}): $\gamma_{\text{left}}=\gamma_{\text{right}}=2.54$ for necrotic core (FN-driven branch), $\gamma_{\text{left}}=1.86$ and $\gamma_{\text{right}}=1.00$ for edema (FP-driven branch), and $\gamma_{\text{left}}=1.08$ and $\gamma_{\text{right}}=1.00$ for enhancing tumor.
These factors raise point coverage to $66/73 = 90.4\%$ for every class at a mean width of 8.5\,mL, which is 12\% narrower than Symmetric WCP and 27\% wider than Directional WCP. The asymmetry factors therefore trade part of the efficiency gained by the directional quantiles for coverage at or above the nominal level; with a single calibration/test split of this size, the coverage difference between Directional WCP and CA-WCP is not statistically resolved (Sec.~\ref{sec:limitations}).

\begin{table}[htbp]
\centering
\caption{Per-class results on BraTS 2020 (target coverage 90\%, $n=73$). Coverage is an exact $k/73$ fraction with a 95\% Clopper--Pearson interval. Width in mL.}
\label{tab:results}
\scriptsize
\setlength{\tabcolsep}{2pt}
\begin{tabular}{lcc|cc|cc}
\toprule
& \multicolumn{2}{c|}{\textbf{Necrosis}}
& \multicolumn{2}{c|}{\textbf{Edema}}
& \multicolumn{2}{c}{\textbf{Enhancing}} \\
\cmidrule(lr){2-3} \cmidrule(lr){4-5} \cmidrule(lr){6-7}
\textbf{Method}
& Cov (\%) [95\% CI] & Wid.
& Cov (\%) [95\% CI] & Wid.
& Cov (\%) [95\% CI] & Wid. \\
\midrule
Unweighted CP & 84.9 [74.6--92.2] & 4.2 & 78.1 [66.9--86.9] & 21.5 & 97.3 [90.5--99.7] & 6.8 \\
Symmetric WCP & 90.4 [81.2--96.1] & 3.9 & 86.3 [76.2--93.2] & 19.2 & 100.0 [95.1--100.0] & 5.9 \\
Directional WCP ($\gamma\equiv1$) & 87.7 [77.9--94.2] & 2.8 & 84.9 [74.6--92.2] & 12.5 & 89.0 [79.5--95.1] & 4.8 \\
\textbf{Ours (CA-WCP)} & 90.4 [81.2--96.1] & 3.6 & 90.4 [81.2--96.1] & 16.8 & 90.4 [81.2--96.1] & 5.1 \\
\midrule
\multicolumn{7}{@{}l}{\textit{Backbone Dice (identical for all rows): necrosis 0.63, edema 0.81, enhancing 0.88}} \\
\bottomrule
\end{tabular}
\end{table}

\begin{table}[htbp]
\centering
\caption{Aggregate results on BraTS 2020 with a fixed MedSAM backbone. Coverage and width are the unweighted mean of the three per-class values in Table~\ref{tab:results}. Dir.~$q$: directional quantiles; Class~$\gamma$: asymmetry factors; Shift: density-ratio weighting. Bold coverage denotes the value closest to the nominal 90\% target; coverage above the target reflects over-conservatism, and a smaller width obtained at coverage below the target is not a like-for-like efficiency gain.}
\label{tab:main_results}
\scriptsize
\setlength{\tabcolsep}{3pt}
\renewcommand{\arraystretch}{1.05}
\begin{tabular}{@{}lcccccc@{}}
\toprule
\textbf{Method} & \textbf{Dice} & \textbf{Cov. (\%)} & \textbf{Wid. (mL)} & \textbf{Dir. $q$} & \textbf{Class $\gamma$} & \textbf{Shift} \\
\midrule
MedSAM~\cite{ma2024segment} (no CP) & 0.77 & -- & -- & -- & -- & -- \\
\midrule
+ Unweighted CP & 0.77 & 86.8 & 10.8 & No & No & No \\
+ Symmetric WCP~\cite{lambert2024robust} & 0.77 & 92.2 & 9.7 & No & No & Yes \\
+ Directional WCP ($\gamma\equiv1$) & 0.77 & 87.2 & 6.7 & Yes & No & Yes \\
\textbf{+ Ours (CA-WCP)} & 0.77 & \textbf{90.4} & 8.5 & Yes & Yes & Yes \\
\bottomrule
\end{tabular}
\end{table}

\textbf{Synthetic CT benchmark.} Table~\ref{tab:synthetic_ct_results} reports the same comparison on the synthetic multi-organ CT data under the induced conditioning shift. The symmetric weighted baseline shows the scale-dependent pattern seen in MRI: over-coverage of the largest structure and under-coverage of the smallest. CA-WCP moves all three organs toward the nominal level while reducing mean width. Because the shift mechanism and the anatomical scale range are fixed by construction here, this result separates the dependence of the asymmetry on structure scale from many of the confounds present in acquired multi-center data.

\begin{table}[htbp]
\centering
\caption{Synthetic multi-organ CT benchmark under induced conditioning shift (target coverage 90\%, $n=100$). Coverage is an exact $k/100$ fraction with a 95\% Clopper--Pearson interval. Width in mL.}
\label{tab:synthetic_ct_results}
\scriptsize
\setlength{\tabcolsep}{3pt}
\renewcommand{\arraystretch}{1.05}
\begin{tabular}{@{}l cc | cc@{}}
\toprule
& \multicolumn{2}{c|}{\textbf{Symmetric WCP}} & \multicolumn{2}{c@{}}{\textbf{Ours (CA-WCP)}} \\
\cmidrule(lr){2-3} \cmidrule(lr){4-5}
\textbf{Organ (Scale)} & \textbf{Cov (\%) [95\% CI]} & \textbf{Wid.} & \textbf{Cov (\%) [95\% CI]} & \textbf{Wid.} \\
\midrule
Liver (large)   & 98.0 [93.0--99.8] & 145.2 & \textbf{91.0 [83.6--95.8]} & \textbf{128.4} \\
Kidneys (medium)& 89.0 [81.2--94.4] & 32.1  & \textbf{90.0 [82.4--95.1]} & \textbf{29.5}  \\
Spleen (small)  & 84.0 [75.3--90.6] & 18.5  & \textbf{90.0 [82.4--95.1]} & \textbf{16.8}  \\
\midrule
\multicolumn{5}{@{}l}{\textit{Backbone Dice: liver 0.92, kidneys 0.88, spleen 0.85}} \\
\bottomrule
\end{tabular}
\end{table}

\subsection{Uncertainty-Conditioned Report Generation}
\label{sec:llm_eval}

We compare two prompt configurations that differ in the information supplied to the model: Image+Segmentation, and Ours (Image+Segmentation+Uncertainty), the latter including the calibrated intervals and the derived confidence level. Generated reports were compared against reference reports for 10 test cases. Because BraTS lacks native radiology reports, the reference reports were written by a board-certified radiologist who mapped the ground-truth segmentation masks to a standardized clinical template. A second, independent radiologist, blinded to the prompt configuration, rated the outputs on factual accuracy and on concordance between the hedging language and the underlying interval width (Table~\ref{tab:llm_evaluation}). The BLEU-4 score of 0.42 is high for clinical text; it follows from prompting the LLM to produce structured findings from quantitative intervals and scoring them against templated references, which constrains lexical variance compared with open-ended dictation. Supplying the calibrated intervals raises uncertainty concordance from 2.1 to 4.6, indicating that the model's expression of confidence tracks the conformal bounds rather than the visual appearance of the segmentation alone. Figure~\ref{fig:3d_visualization} shows a representative case with its generated report.

\begin{table}[htbp]
\centering
\caption{Assessment of uncertainty-conditioned report generation on 10 BraTS test cases. BLEU-4 is computed against the templated reference reports; factual accuracy and uncertainty concordance are rated on a 1--5 scale by an independent radiologist blinded to the prompt configuration.}
\label{tab:llm_evaluation}
\scriptsize
\setlength{\tabcolsep}{4pt}
\renewcommand{\arraystretch}{1.1}
\begin{tabular}{@{}l ccc@{}}
\toprule
\textbf{Prompt configuration} & \textbf{BLEU-4} & \textbf{Factual Acc.} & \textbf{Uncert. Concord.} \\
\midrule
Image+Seg  & 0.28 & 3.8 / 5.0 & 2.1 / 5.0 \\
\textbf{Ours (Image+Seg+UQ)} & \textbf{0.42} & \textbf{4.8 / 5.0} & \textbf{4.6 / 5.0} \\
\bottomrule
\end{tabular}
\end{table}

\begin{figure}[htbp]
\centering
\includegraphics[width=\columnwidth]{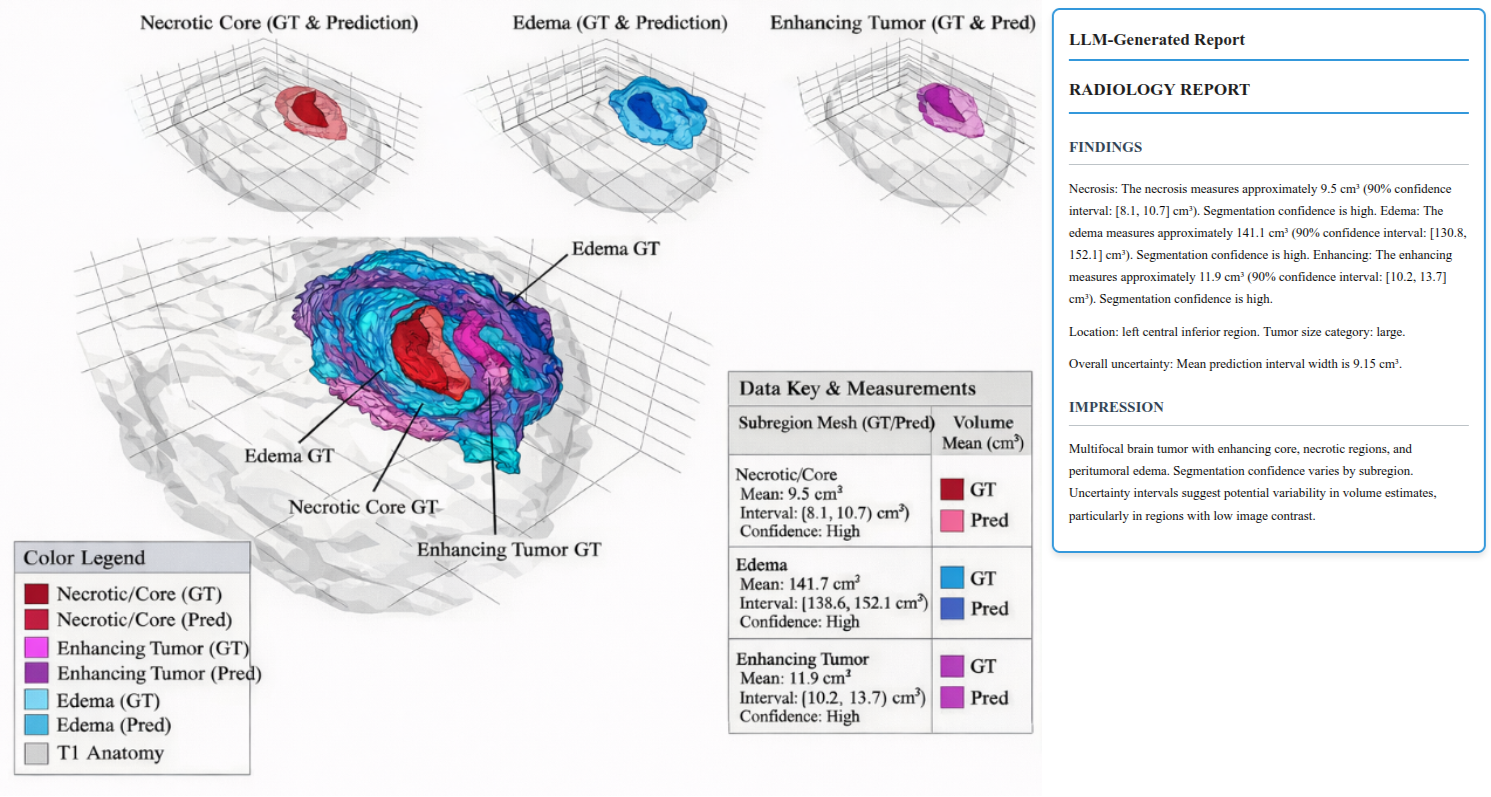}
\caption{Representative patient: ground-truth and predicted tumor subregions overlaid on the T1-weighted MR anatomy, with the uncertainty-conditioned report generated from the calibrated volumetric intervals.}
\label{fig:3d_visualization}
\end{figure}

% ---------------------------------------------------------------
\section{Discussion}
\label{sec:discussion}

\textbf{Analysis of asymmetric calibration.} Comparing Symmetric WCP, Directional WCP ($\gamma \equiv 1$), and CA-WCP in Table~\ref{tab:results} separates the two mechanisms. The directional quantiles supply the efficiency: they prevent the dominant error direction from widening both margins, which reduces mean width by roughly 30\% relative to Symmetric WCP. The asymmetry factors then add a conservative, class-specific margin on the side where validation errors concentrate. Because $\gamma \geq 1$, this margin can only widen the interval and cannot invalidate the guarantee of Proposition~\ref{prop:coverage}; in practice it compensates for the approximate density-ratio weights, under which the $\gamma \equiv 1$ intervals fell slightly below the target in point coverage. Whether that shortfall is systematic or a finite-sample effect cannot be decided from a single split of 73 test subjects, so we report the width cost of the margin explicitly and leave the choice between the two operating points to the user. On the synthetic CT benchmark, CA-WCP shows the same qualitative behavior relative to Symmetric WCP: the over-covered large structure loses excess width and the under-covered small structure gains coverage, which suggests that the effect depends on the directional error structure rather than on brain tumor morphology specifically.

Inference cost is identical to standard split CP because the quantiles are precomputed during calibration.

\subsection{Limitations and Future Directions}
\label{sec:limitations}

Our evaluation uses one clinical cohort and one synthetic benchmark. The synthetic CT data are generated by a learned model and therefore inherit its anatomical priors, so while they isolate the effect of structure scale under a controlled shift, validation on acquired multi-center abdominal CT remains necessary before drawing conclusions about deployment on real CT. Test cohorts of $n=73$ and $n=100$ also place a floor on the precision with which coverage can be estimated, which is why we report exact Clopper--Pearson intervals rather than point estimates alone. All results come from a single calibration/test split; in particular, the coverage gap between Directional WCP and CA-WCP lies within the sampling uncertainty at $n=73$, and repeated random splits would be needed to establish whether weight-estimation error causes systematic under-coverage of the $\gamma \equiv 1$ intervals. The reader assessment of generated reports covers 10 cases and a single rater, and is best read as an indication of how uncertainty conditioning changes reporting language rather than as a clinical validation.

Several directions follow naturally: a repeated-split analysis of the Directional WCP and CA-WCP operating points; a sweep of the generator's conditioning gap to map how coverage and width respond to shift magnitude; instance-adaptive asymmetry factors for heterogeneous tumors; and multi-reader evaluation of uncertainty-conditioned reporting on large paired image-report corpora such as MR-RATE~\cite{mrrate2026dataset}.

% ---------------------------------------------------------------
\section{Conclusion}
\label{sec:conclusion}

We introduced CA-WCP, a class-aware asymmetric weighted conformal prediction framework for volumetric uncertainty quantification in 3D medical image segmentation. We showed that combining density-ratio weighting with directional quantiles calibrated at level $1-\alpha/2$ preserves the marginal coverage guarantee under weighted exchangeability, and we gave the proof together with an explicit account of the approximation introduced by estimated weights. Empirically, on BraTS 2020 and on a controlled synthetic multi-organ CT benchmark, the 95\% Clopper--Pearson intervals for observed coverage contain the nominal 90\% level for every class, and average interval widths are 8--14\% smaller than those of the symmetric weighted baseline; most of this gain comes from the directional quantiles, with the asymmetry factors adding a conservative class-specific margin. We further showed that the calibrated intervals can be encoded into prompts for a multimodal LLM to produce uncertainty-conditioned reports whose hedging language tracks the underlying interval width. Because CA-WCP is a post-hoc wrapper, it adds this calibrated, shift-aware uncertainty without modifying the underlying segmentation model.

\appendix

\section{Proof of Proposition~\ref{prop:coverage}}
\label{app:proof}

\begin{pf}
Fix the class $c$ and condition throughout on $\gamma_{\text{left}}^{(c)}, \gamma_{\text{right}}^{(c)}$. Because these are functions of $\mathcal{D}_{\text{val}}$ only, and $\mathcal{D}_{\text{val}}$ is disjoint from $\mathcal{D}_{\text{cal}}$ and from the test point, conditioning on them does not disturb the weighted exchangeability of the calibration scores and the test score.

\textbf{Step 1 (per-direction coverage).} Consider the left direction. The scores $s_{\text{left}}^{(c)}(1),\ldots,s_{\text{left}}^{(c)}(n)$ and the test score $s_{\text{left}}^{(c)}(\text{test})$ are deterministic functions of the corresponding data points and are therefore weighted exchangeable whenever the data are. By the weighted conformal prediction guarantee of Tibshirani et al.~\cite{tibshirani2019conformal}, applied with the true likelihood-ratio weights and the weighted quantile of Eq.~(\ref{eq:weighted_quantile}) evaluated at level $1-\alpha/2$ as in Eq.~(\ref{eq:qleft}),
\[
\mathbb{P}\bigl(s_{\text{left}}^{(c)}(\text{test}) \leq q_{\text{left}}^{(c)}\bigr) \;\geq\; 1 - \tfrac{\alpha}{2}.
\]
The identical argument applied to Eq.~(\ref{eq:qright}) gives $\mathbb{P}\bigl(s_{\text{right}}^{(c)}(\text{test}) \leq q_{\text{right}}^{(c)}\bigr) \geq 1 - \tfrac{\alpha}{2}$.

\textbf{Step 2 (monotone widening).} Every branch of Eqs.~(\ref{eq:asymmetry_left})--(\ref{eq:asymmetry_right}) satisfies $\gamma \geq 1$: the FN-driven branch adds a non-negative multiple of $\text{FN}_{\text{rate}}^{(c)} > 0.5$; the two FP-driven branches add non-negative increments on their respective domains ($\text{FP}_{\text{rate}}^{(c)} \geq 1.5$ and $\text{FP}_{\text{rate}}^{(c)} \in (0.7,1.5)$); and the default branch equals 1. Since the directional scores are non-negative and $q_{\text{left}}^{(c)}, q_{\text{right}}^{(c)} \geq 0$, we have $\gamma_{\text{left}}^{(c)} q_{\text{left}}^{(c)} \geq q_{\text{left}}^{(c)}$ and $\gamma_{\text{right}}^{(c)} q_{\text{right}}^{(c)} \geq q_{\text{right}}^{(c)}$. Hence
\[
\bigl\{ s_{\text{left}}^{(c)}(\text{test}) \leq q_{\text{left}}^{(c)} \bigr\} \subseteq \bigl\{ s_{\text{left}}^{(c)}(\text{test}) \leq \gamma_{\text{left}}^{(c)} q_{\text{left}}^{(c)} \bigr\},
\]
and likewise on the right, so the bounds of Step 1 are preserved after scaling.

\textbf{Step 3 (union bound).} By Eqs.~(\ref{eq:score_left})--(\ref{eq:score_right}) and~(\ref{eq:interval}), the event $v_{\text{test}}^{*,(c)} \notin \Gamma^{(c)}(\mathbf{x}_{\text{test}})$ holds if and only if $s_{\text{left}}^{(c)}(\text{test}) > \gamma_{\text{left}}^{(c)} q_{\text{left}}^{(c)}$ or $s_{\text{right}}^{(c)}(\text{test}) > \gamma_{\text{right}}^{(c)} q_{\text{right}}^{(c)}$, corresponding to the true volume falling below the scaled lower endpoint or above the scaled upper endpoint. Therefore
\begin{align*}
\mathbb{P}\bigl(v_{\text{test}}^{*,(c)} \notin \Gamma^{(c)}(\mathbf{x}_{\text{test}})\bigr)
&\leq \mathbb{P}\bigl(s_{\text{left}}^{(c)}(\text{test}) > \gamma_{\text{left}}^{(c)} q_{\text{left}}^{(c)}\bigr)
   + \mathbb{P}\bigl(s_{\text{right}}^{(c)}(\text{test}) > \gamma_{\text{right}}^{(c)} q_{\text{right}}^{(c)}\bigr) \\
&\leq \tfrac{\alpha}{2} + \tfrac{\alpha}{2} \;=\; \alpha,
\end{align*}
which gives the claim.
\end{pf}

% ---------------------------------------------------------------
\bibliographystyle{elsarticle-num}
\bibliography{PR_bib}

\end{document}